\title{Variational Auto-Encoders: Not all failures are equal}
\date{~}
\author[1]{Victor Berger}
\author[1]{Michele Sebag}
\affil[1]{TAU, CNRS $-$ INRIA $-$ Univ. Paris-Saclay, France}
\begin{document}

\maketitle

\newtheorem{theorem}{Theorem}

\begin{abstract}
We claim that a source of severe failures for Variational Auto-Encoders is the choice of the distribution class used for the observation model. 
A first theoretical and experimental contribution of the paper is to establish that even in the large sample limit with arbitrarily powerful neural architectures and latent space, the VAE fails 
if the sharpness of the distribution class does not match the scale of the data.
Our second claim is that  the distribution sharpness must preferably be learned by the VAE (as opposed to, fixed and optimized offline): Autonomously adjusting this sharpness allows the VAE to dynamically control the trade-off between the optimization of the reconstruction loss and the latent compression. A second empirical contribution is to show how the control of this trade-off is instrumental in escaping poor local optima, akin a simulated annealing schedule. 
Both claims are backed upon experiments on artificial data, MNIST and CelebA, showing how sharpness learning addresses the notorious VAE blurriness issue.
\end{abstract}

\section{Introduction}
A huge leap forward to the identification of generative models from data, Variational Auto-Encoders (VAEs) \cite{kingma_semi-supervised_2014,rezende_stochastic_2014} can be analysed as tackling two complementary problems: i) identifying the data manifold (Pb (1)); ii) identifying a distribution on this manifold that actually matches the data distribution (Pb (2)). 
The VAE training criterion tackles both problems through the joint optimization of a data fitting term and a regularization term enforcing the smoothness of the generative model w.r.t. the chosen distribution class (section \ref{sec:Formal}).

The first claim of the paper is that Pb (1) is the most severe bottleneck for the VAE: if the data manifold is not properly identified, the VAE fails to learn any relevant probability distribution. This failure can be due to several reasons, e.g. insufficiently powerful neural architectures
or reconstruction losses \cite{larsen_autoencoding_2016,so_nderby_ladder_2016,huang_introvae_2018}. We posit that a necessary condition for a successful VAE training is an appropriate choice of the distribution class of the observation model, and that an inappropriate choice {\em cannot be compensated for}.
This claim is backed upon theoretical and experimental results on a simple artificial problem,\footnote{ Quoting Ali Rahimi (2017): {\em  Simple experiments, simple theorems are the building blocks that help us understand more complicated systems}.} i) showing that even in the large sample limit with an infinite representation power of the neural architecture, a VAE does fail if the sharpness of the distribution class does not match the data scale; and ii) analytically characterizing the mismatch between the distribution sharpness and the data scale. 

The second claim of the paper is that a most advantageous option is to let the VAE learn the appropriate sharpness of the observation model, defining the L-VAE setting.
Empirical evidence shows that a L-VAE $-$ considering a flexible distribution class $p_\theta$ and learning the best hyper-parameter $\theta^*$ thereof $-$ behaves significantly better than a VAE considering the fixed distribution class $p_{\theta^*}$. The inspection of the training dynamics shows that the autonomous adjustment of the sharpness allows the L-VAE to control the trade-off between the data fitting and the regularization terms.
In contrast, considering a distribution with a fixed though proper sharpness 
makes the data fitting term dominate the optimization in the early learning phase, engaging the VAE in unpromising regions w.r.t. the optimization of the regularization objective.
These observations, backed upon experiments on artificial data, MNIST and CelebA, are interpreted in terms of a simulated annealing process, as analyzed in  \cite{huang_improving_2018} w.r.t. the dimensionality of a neural net. The lesson learned is that the learning process should better operate in a larger space than the one containing the eventual solution, for the sake of a less constrained and more efficient optimization process. 

The paper is organized as follows. Section \ref{sec:Formal} introduces the state of the art. Section \ref{sec:th} presents a theoretical analysis of the impact of the observation model, and describes the L-VAE setting. Section \ref{sec:dyn} reports and discusses empirical evidence of the merits of L-VAE.  The concluding remarks present some perspectives for further research.

\section{Formal Background and Related Work} \label{sec:Formal}
Now a classical approach for generative modelling, Variational Auto-Encoders \cite{rezende_stochastic_2014,kingma_auto-encoding_2014} search for a generative distribution $p_\theta(x)$, expressed w.r.t. some latent variable $z$:
\begin{equation}
p_\theta(x) = \int p_\theta(x|z) p_\theta(z) dz \label{eq:generative-model}
\end{equation}
The encoder module yields the parameters of distribution $q(z|x)$, defined on the latent space and used to sample $z$. 
The decoder module yields the parameters of distribution $p_\theta(x|z)$, defined on the input space and used to generate (ideally) a new sample of the data distribution. 
The VAE loss is composed of a data fitting term (maximizing the data likelihood) and a latent compression term (enforcing that $q(z|x)$ stay close to the latent prior $p_\theta(z)$):
\begin{equation}
    \mathcal{L}_{VAE}(x) =  \mathop{\mathbb{E}}_{z \sim q} - \log p_\theta(x|z) + D_{KL}(q(z|x)\|p_\theta(z)) \label{eq:vae-loss}
\end{equation}
The data fitting term is optimized by taking advantage of the Evidence Lower Bound (ELBO), stating that for any distribution $q(z|x)$:
\begin{equation}
    \log p_\theta(x) \geq \mathbb{E}_{z \sim q(z|x)} \log \frac{p_\theta(x|z) p_\theta(z)}{q(z|x)} \label{eq:ELBO}
\end{equation}
The minimization of Eq. (\ref{eq:vae-loss}) enables the end-to-end training of the VAE (Fig.  \ref{fig:vae}), specifically the encoder $q(z|x)$ (a.k.a. inference model) and decoder $p_\theta(x|z)$ (a.k.a. observation model), both implemented as neural networks. Both encoder and decoder take a value ($x$ or $z$) as input and produce the parameters of a distribution ($q(z|x)$ or $p(x|z)$). The mainstream class of distribution chosen for the inference model is a Normal distribution, the mean and variance of which are computed by the encoder along the known re-parametrization trick, $q(z|x) = {\cal N}(\mu(x), \sigma(x))$ with $\mu(x)$ and $\sigma(x)$ provided by the encoder module.

\begin{figure}
    \centering
    \resizebox{0.9\columnwidth}{!}{% Graphic for TeX using PGF
% Title: /home/levans/these/vae.dia
% Creator: Dia v0.97.3
% CreationDate: Mon Feb  3 20:47:34 2020
% For: levans
% \usepackage{tikz}
% The following commands are not supported in PSTricks at present
% We define them conditionally, so when they are implemented,
% this pgf file will use them.
\ifx\du\undefined
  \newlength{\du}
\fi
\setlength{\du}{15\unitlength}
\begin{tikzpicture}
\pgftransformxscale{1.000000}
\pgftransformyscale{-1.000000}
\definecolor{dialinecolor}{rgb}{0.000000, 0.000000, 0.000000}
\pgfsetstrokecolor{dialinecolor}
\definecolor{dialinecolor}{rgb}{1.000000, 1.000000, 1.000000}
\pgfsetfillcolor{dialinecolor}
\pgfsetlinewidth{0.050000\du}
\pgfsetdash{}{0pt}
\pgfsetdash{}{0pt}
\pgfsetroundjoin
{\pgfsetcornersarced{\pgfpoint{1.000000\du}{1.000000\du}}\definecolor{dialinecolor}{rgb}{0.898039, 0.898039, 0.898039}
\pgfsetfillcolor{dialinecolor}
\fill (14.500000\du,-1.500000\du)--(14.500000\du,7.000000\du)--(19.100000\du,7.000000\du)--(19.100000\du,-1.500000\du)--cycle;
}{\pgfsetcornersarced{\pgfpoint{1.000000\du}{1.000000\du}}\definecolor{dialinecolor}{rgb}{0.000000, 0.000000, 0.000000}
\pgfsetstrokecolor{dialinecolor}
\draw (14.500000\du,-1.500000\du)--(14.500000\du,7.000000\du)--(19.100000\du,7.000000\du)--(19.100000\du,-1.500000\du)--cycle;
}\pgfsetlinewidth{0.050000\du}
\pgfsetdash{}{0pt}
\pgfsetdash{}{0pt}
\pgfsetroundjoin
{\pgfsetcornersarced{\pgfpoint{1.000000\du}{1.000000\du}}\definecolor{dialinecolor}{rgb}{0.898039, 0.898039, 0.898039}
\pgfsetfillcolor{dialinecolor}
\fill (26.000000\du,-1.500000\du)--(26.000000\du,7.000000\du)--(30.500000\du,7.000000\du)--(30.500000\du,-1.500000\du)--cycle;
}{\pgfsetcornersarced{\pgfpoint{1.000000\du}{1.000000\du}}\definecolor{dialinecolor}{rgb}{0.000000, 0.000000, 0.000000}
\pgfsetstrokecolor{dialinecolor}
\draw (26.000000\du,-1.500000\du)--(26.000000\du,7.000000\du)--(30.500000\du,7.000000\du)--(30.500000\du,-1.500000\du)--cycle;
}\pgfsetlinewidth{0.100000\du}
\pgfsetdash{}{0pt}
\pgfsetdash{}{0pt}
\pgfsetroundjoin
{\pgfsetcornersarced{\pgfpoint{0.500000\du}{0.500000\du}}\definecolor{dialinecolor}{rgb}{0.000000, 0.000000, 0.000000}
\pgfsetstrokecolor{dialinecolor}
\draw (20.000000\du,-1.000000\du)--(20.000000\du,2.000000\du)--(25.000000\du,2.000000\du)--(25.000000\du,-1.000000\du)--cycle;
}% setfont left to latex
\definecolor{dialinecolor}{rgb}{0.000000, 0.000000, 0.000000}
\pgfsetstrokecolor{dialinecolor}
\node at (22.500000\du,0.521559\du){Encoder};
\pgfsetlinewidth{0.100000\du}
\pgfsetdash{}{0pt}
\pgfsetdash{}{0pt}
\pgfsetroundjoin
{\pgfsetcornersarced{\pgfpoint{0.500000\du}{0.500000\du}}\definecolor{dialinecolor}{rgb}{0.000000, 0.000000, 0.000000}
\pgfsetstrokecolor{dialinecolor}
\draw (20.000000\du,2.500000\du)--(20.000000\du,5.500000\du)--(25.000000\du,5.500000\du)--(25.000000\du,2.500000\du)--cycle;
}% setfont left to latex
\definecolor{dialinecolor}{rgb}{0.000000, 0.000000, 0.000000}
\pgfsetstrokecolor{dialinecolor}
\node at (22.500000\du,4.021559\du){Decoder};
\pgfsetlinewidth{0.100000\du}
\pgfsetdash{}{0pt}
\pgfsetdash{}{0pt}
\pgfsetbuttcap
{
\definecolor{dialinecolor}{rgb}{0.000000, 0.000000, 0.000000}
\pgfsetfillcolor{dialinecolor}
% was here!!!
\pgfsetarrowsend{stealth}
\definecolor{dialinecolor}{rgb}{0.000000, 0.000000, 0.000000}
\pgfsetstrokecolor{dialinecolor}
\draw (27.000000\du,4.000000\du)--(25.000000\du,4.000000\du);
}
\pgfsetlinewidth{0.100000\du}
\pgfsetdash{}{0pt}
\pgfsetdash{}{0pt}
\pgfsetbuttcap
{
\definecolor{dialinecolor}{rgb}{0.000000, 0.000000, 0.000000}
\pgfsetfillcolor{dialinecolor}
% was here!!!
\pgfsetarrowsend{stealth}
\definecolor{dialinecolor}{rgb}{0.000000, 0.000000, 0.000000}
\pgfsetstrokecolor{dialinecolor}
\draw (20.000000\du,4.000000\du)--(18.000000\du,4.000000\du);
}
\pgfsetlinewidth{0.100000\du}
\pgfsetdash{}{0pt}
\pgfsetdash{}{0pt}
\pgfsetbuttcap
{
\definecolor{dialinecolor}{rgb}{0.000000, 0.000000, 0.000000}
\pgfsetfillcolor{dialinecolor}
% was here!!!
\pgfsetarrowsend{stealth}
\definecolor{dialinecolor}{rgb}{0.000000, 0.000000, 0.000000}
\pgfsetstrokecolor{dialinecolor}
\draw (18.000000\du,0.500000\du)--(20.000000\du,0.500000\du);
}
\pgfsetlinewidth{0.100000\du}
\pgfsetdash{}{0pt}
\pgfsetdash{}{0pt}
\pgfsetbuttcap
{
\definecolor{dialinecolor}{rgb}{0.000000, 0.000000, 0.000000}
\pgfsetfillcolor{dialinecolor}
% was here!!!
\pgfsetarrowsend{stealth}
\definecolor{dialinecolor}{rgb}{0.000000, 0.000000, 0.000000}
\pgfsetstrokecolor{dialinecolor}
\draw (25.000000\du,0.500000\du)--(27.000000\du,0.500000\du);
}
% setfont left to latex
\definecolor{dialinecolor}{rgb}{0.000000, 0.000000, 0.000000}
\pgfsetstrokecolor{dialinecolor}
\node at (28.200000\du,4.021559\du){$z$};
% setfont left to latex
\definecolor{dialinecolor}{rgb}{0.000000, 0.000000, 0.000000}
\pgfsetstrokecolor{dialinecolor}
\node[anchor=east] at (17.250000\du,0.521559\du){$x$};
% setfont left to latex
\definecolor{dialinecolor}{rgb}{0.000000, 0.000000, 0.000000}
\pgfsetstrokecolor{dialinecolor}
\node at (28.250000\du,2.971589\du){Latent space};
% setfont left to latex
\definecolor{dialinecolor}{rgb}{0.000000, 0.000000, 0.000000}
\pgfsetstrokecolor{dialinecolor}
\node at (16.800000\du,2.971589\du){Data space};
\pgfsetlinewidth{0.050000\du}
\pgfsetdash{}{0pt}
\pgfsetdash{}{0pt}
\pgfsetroundjoin
{\pgfsetcornersarced{\pgfpoint{0.500000\du}{0.500000\du}}\definecolor{dialinecolor}{rgb}{1.000000, 1.000000, 1.000000}
\pgfsetfillcolor{dialinecolor}
\fill (27.000000\du,0.000000\du)--(27.000000\du,1.000000\du)--(29.500000\du,1.000000\du)--(29.500000\du,0.000000\du)--cycle;
}{\pgfsetcornersarced{\pgfpoint{0.500000\du}{0.500000\du}}\definecolor{dialinecolor}{rgb}{0.000000, 0.000000, 0.000000}
\pgfsetstrokecolor{dialinecolor}
\draw (27.000000\du,0.000000\du)--(27.000000\du,1.000000\du)--(29.500000\du,1.000000\du)--(29.500000\du,0.000000\du)--cycle;
}% setfont left to latex
\definecolor{dialinecolor}{rgb}{0.000000, 0.000000, 0.000000}
\pgfsetstrokecolor{dialinecolor}
\node at (28.250000\du,0.521559\du){$q(z|x)$};
\pgfsetlinewidth{0.050000\du}
\pgfsetdash{}{0pt}
\pgfsetdash{}{0pt}
\pgfsetroundjoin
{\pgfsetcornersarced{\pgfpoint{0.500000\du}{0.500000\du}}\definecolor{dialinecolor}{rgb}{1.000000, 1.000000, 1.000000}
\pgfsetfillcolor{dialinecolor}
\fill (15.500000\du,3.500000\du)--(15.500000\du,4.500000\du)--(18.000000\du,4.500000\du)--(18.000000\du,3.500000\du)--cycle;
}{\pgfsetcornersarced{\pgfpoint{0.500000\du}{0.500000\du}}\definecolor{dialinecolor}{rgb}{0.000000, 0.000000, 0.000000}
\pgfsetstrokecolor{dialinecolor}
\draw (15.500000\du,3.500000\du)--(15.500000\du,4.500000\du)--(18.000000\du,4.500000\du)--(18.000000\du,3.500000\du)--cycle;
}% setfont left to latex
\definecolor{dialinecolor}{rgb}{0.000000, 0.000000, 0.000000}
\pgfsetstrokecolor{dialinecolor}
\node at (16.750000\du,4.021559\du){$p(x|z)$};
\pgfsetlinewidth{0.050000\du}
\pgfsetdash{}{0pt}
\pgfsetdash{}{0pt}
\pgfsetroundjoin
{\pgfsetcornersarced{\pgfpoint{0.500000\du}{0.500000\du}}\definecolor{dialinecolor}{rgb}{1.000000, 1.000000, 1.000000}
\pgfsetfillcolor{dialinecolor}
\fill (27.000000\du,5.500000\du)--(27.000000\du,6.500000\du)--(29.500000\du,6.500000\du)--(29.500000\du,5.500000\du)--cycle;
}{\pgfsetcornersarced{\pgfpoint{0.500000\du}{0.500000\du}}\definecolor{dialinecolor}{rgb}{0.000000, 0.000000, 0.000000}
\pgfsetstrokecolor{dialinecolor}
\draw (27.000000\du,5.500000\du)--(27.000000\du,6.500000\du)--(29.500000\du,6.500000\du)--(29.500000\du,5.500000\du)--cycle;
}% setfont left to latex
\definecolor{dialinecolor}{rgb}{0.000000, 0.000000, 0.000000}
\pgfsetstrokecolor{dialinecolor}
\node at (28.250000\du,6.021559\du){$p_\theta(z)$};
\pgfsetlinewidth{0.100000\du}
\pgfsetdash{}{0pt}
\pgfsetdash{}{0pt}
\pgfsetbuttcap
{
\definecolor{dialinecolor}{rgb}{0.000000, 0.000000, 0.000000}
\pgfsetfillcolor{dialinecolor}
% was here!!!
\pgfsetarrowsend{stealth}
\definecolor{dialinecolor}{rgb}{0.000000, 0.000000, 0.000000}
\pgfsetstrokecolor{dialinecolor}
\draw (28.250000\du,5.500000\du)--(28.250000\du,4.400000\du);
}
\pgfsetlinewidth{0.100000\du}
\pgfsetdash{}{0pt}
\pgfsetdash{}{0pt}
\pgfsetmiterjoin
\pgfsetbuttcap
{
\definecolor{dialinecolor}{rgb}{0.000000, 0.000000, 0.000000}
\pgfsetfillcolor{dialinecolor}
% was here!!!
\pgfsetarrowsend{stealth}
{\pgfsetcornersarced{\pgfpoint{0.000000\du}{0.000000\du}}\definecolor{dialinecolor}{rgb}{0.000000, 0.000000, 0.000000}
\pgfsetstrokecolor{dialinecolor}
\draw (29.500000\du,0.500000\du)--(31.000000\du,0.500000\du)--(31.000000\du,4.000000\du)--(29.500000\du,4.000000\du);
}}
\end{tikzpicture}}
    \caption{VAE architecture: the encoder (resp.  decoder) module is a neural network taking $x$ (resp $z$) as input and computing the parameters of distribution $q(z|x)$ (resp., $p(x|z)$). }
    \label{fig:vae}
\end{figure}
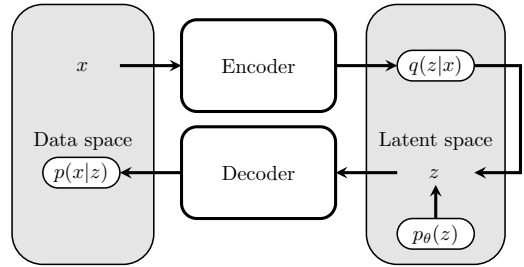

The maximization of the data likelihood (Eq. (\ref{eq:generative-model})) does not prevent the observation model from generating out-of-distribution samples \cite{theis_note_2016}. In practice, VAEs are observed to generate unrealistic samples more often than Generative Adversarial Networks \cite{radford_unsupervised_2015}, prompting the introduction of adversarial losses in VAEs (see for instance  \cite{larsen_autoencoding_2016,dosovitskiy_generating_2016}).

\paragraph{Inference model.}
The role of the inference model is to enforce the quality of the lower bound (Eq. (\ref{eq:ELBO})). In the (ideal) equality case, $q(z|x) = p_\theta(z|x)$, that is, the inference model is a probabilistic inverse of the observation model relative to the prior $p(z)$. In such a case, the dataset would be perfectly represented (compressed) in the latent space, with $q(z|x)$ and  $p_\theta(x|z)$ potentially being very complex maps. When the inference model is poor, the ELBO is quite loose, the input space is mapped onto a poorly compressed latent space, increasing the risk of generating out-of-distribution samples. 

As said, the inference model is classically built upon a Normal distribution through the reparametrization trick, while prior model $p(z)$ is set to $\mathcal{N}(0;\mathbf{I})$. The limitations of this setting are related to its smoothness, that might not reflect the data, opening mostly two directions for VAE extensions. 

In the case where the dataset is structured, e.g. involving disjoint clusters, the problem is that Gaussian encoders cannot build sharp boundaries in the latent space. A natural option thus is to also create clusters in the latent space to ensure a good reconstruction, via a more complex prior. More complex distribution classes, e.g. involving categorical variables, are considered to handle structures in the dataset \cite{kingma_semi-supervised_2014,kingma_improved_2016}, as well as auto-regressive priors \cite{nalisnick_stick-breaking_2017,van_den_oord_neural_2017,razavi_generating_2019}.

Another direction for VAE extensions stems from the following remark. As the decoder defines a continuous map from the latent $z$ to the data $x$, it is necessary that the latent space topology jointly defined by $q(z|x)$ and $p(z)$ matches the dataset topology. This remark calls for using hierarchical latent structures \cite{so_nderby_ladder_2016} or exploring non-Euclidean latent spaces \cite{nagano_wrapped_2019,mathieu_continuous_2019}.

\paragraph{Observation model.}
Besides a good inference model $q(x|z)$, the VAE success also requires a good observation model $p_\theta(x|z)$, efficiently untangling a compressed $q(x|z)$ and mapping it back onto the data space.

The choice of the distribution class for the observation model governs the reconstruction loss ($-\log p_\theta(x|z)$), with a significant impact on the quality of the generated samples. Typically, the use of a Normal distribution class with fixed variance boils down to using a mean square error in the input space. Such a model is clearly inappropriate in the domain of computer vision, as setting an independent noise on the image pixels hardly yields a realistic image.

Two main research directions have been investigated to improve the observation model while keeping computational complexity under control. The former one leverages domain knowledge to augment the VAE criterion, e.g. using perceptual metrics \cite{larsen_autoencoding_2016,dosovitskiy_generating_2016}.
The latter one considers significantly more complex distribution classes
\cite{gulrajani_pixelvae_2016,sadeghi_pixelvae_2019},  e.g. involving autoregressive models over the pixels. A difficulty with such models is that their representation power is sufficient to learn the data distribution while ignoring the latent space. This phenomenon, referred to as posterior collapse \cite{he_lagging_2019,alemi_fixing_2017}, is blamed (in the linear VAE case) on the presence of spurious optima in the log marginal likelihood
\cite{lucas_dontextquotesingle_2019}.

\section{Impact of the observation model}
\label{sec:th}
The claim that the most severe cause of VAE failures is a bad approximation of the data manifold (Pb (1)) is argued as follows. On the one hand, the samples generated by the VAE can hardly be of better quality than the samples reconstructed from the data; and the reconstructed samples can only be as good as the manifold approximation.
On the other hand, the approximation of the data distribution on the manifold (Pb (2)) is directly addressed by the ELBO criterion (Eq (\ref{eq:ELBO})), accounting for the diverse modes of the data; actually, VAEs do not suffer from the notorious mode-dropping phenomenon observed in GANs  \cite{arjovsky_towards_2017}.

As said, this paper focuses on one among the possible causes for the bad approximation of the data manifold, namely the choice of the distribution family $p_\theta$. 
This section first presents a case of non-identifiability of the data manifold, only due to the choice of the distribution family $p_\theta$. Some approaches used in the literature to avoid this failure case are discussed and an alternative is proposed.

\subsection{A non-identifiability case}\label{sec:non-id}
Let us consider the usual choice of a Normal distribution, where the  observation model $p(x | z)$
is defined as ${\cal N}(\mu(z), \sigma^2 \mathbf{I})$ with $\mu(z)$ the output of the decoder network and $\sigma$ a fixed scalar variance. While in this setting the reconstruction term conveniently boils down to the squared error loss, the choice of $\sigma$ has a dramatic impact on the identification of the data manifold. 
Let us consider a dataset located on the hyper-sphere of radius $R$ in dimension $D$. The following result establishes that a VAE cannot capture the data manifold for large values of hyper-parameter $\sigma$:

\begin{theorem}
Let us assume arbitrarily powerful neural architectures for the encoder and decoder networks, as well as an arbitrarily powerful latent space.
Let us further assume that the data is an infinite sample size of the $D$-dimensional hyper-sphere of radius $R$ noted  $\mathbf{S}_{D-1}(R)$. 

Then, a VAE with observation model ${\cal N}(\cdot,\sigma^2 \mathbf{I})$ fails to characterize the data manifold $\mathbf{S}_{D-1}(R)$ if 
$$\sigma \geq \frac{R}{\sqrt{D-1}}$$
\end{theorem}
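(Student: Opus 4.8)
The plan is to pass to the infinite-capacity idealization, reduce the VAE criterion to a pure convolution problem on $\mathbb{R}^D$, and show that its optimum is the degenerate ``collapsed'' solution in which the decoder ignores the latent code. \textbf{Reduction.} With an arbitrarily powerful inference network the encoder can output $q(z\mid x)=p_\theta(z\mid x)$, so the gap between $\log p_\theta(x)$ and the ELBO, namely $D_{KL}\big(q(z\mid x)\,\|\,p_\theta(z\mid x)\big)$, vanishes; averaging the loss of Eq. (\ref{eq:vae-loss}) over the infinite sample then leaves the cross-entropy $H(p_{\mathrm{data}},p_\theta)=-\mathbb{E}_{x\sim p_{\mathrm{data}}}\log p_\theta(x)$. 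With an arbitrarily powerful decoder and latent space, $p_\theta(x)=\int\mathcal{N}(x;\mu(z),\sigma^2\mathbf{I})\,p(z)\,dz$ can be made equal to any mixture $\nu*g_\sigma$ with $g_\sigma=\mathcal{N}(0,\sigma^2\mathbf{I})$ and $\nu$ an arbitrary distribution on $\mathbb{R}^D$ (take $z\sim\nu$ and $\mu=\mathrm{Id}$, singular $\nu$ arising as limits). Since the image $\{\mu(z)\}=\mathrm{supp}\,\nu$ of the decoder is exactly the ``learned manifold'', it suffices to show that for $\sigma\ge R/\sqrt{D-1}$ the unique minimiser of $\nu\mapsto H(p_{\mathrm{data}},\nu*g_\sigma)$ is $\nu=\delta_0$, which forces $p_\theta=\mathcal{N}(0,\sigma^2\mathbf{I})$ with a constant decoder and an encoder that discards $x$ — a total failure to capture $\mathbf{S}_{D-1}(R)$ (and, notably, a genuinely realisable solution, not a mere limit).

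\textbf{A Jensen bound pinned at $\delta_0$.} By convexity of $-\log$ and Fubini, writing $\Psi(m):=\int_{\mathbf{S}_{D-1}(R)}g_\sigma(x-m)\,p_{\mathrm{data}}(dx)$,
\[
H(p_{\mathrm{data}},\nu*g_\sigma)\ \ge\ -\log\!\int(\nu*g_\sigma)(x)\,p_{\mathrm{data}}(dx)\ =\ -\log\!\int\Psi(m)\,\nu(dm).
\]
Since $|x|=R$ on the data manifold, a short computation using rotational invariance gives $\Psi(m)=\Psi(0)\,h(|m|)$ with $\Psi(0)=(2\pi\sigma^2)^{-D/2}e^{-R^2/(2\sigma^2)}$ (so $-\log\Psi(0)=H(p_{\mathrm{data}},g_\sigma)$, the value attained at $\nu=\delta_0$) and
\[
h(s)\ =\ e^{-s^2/(2\sigma^2)}\,\mathbb{E}_u\!\big[\,e^{(Rs/\sigma^2)\,u_1}\,\big],
\]
where $u$ is uniform on the unit sphere $\mathbf{S}^{D-1}$ and $u_1$ is its first coordinate. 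Hence, as soon as $h(s)\le1$ for all $s\ge0$, every $\nu$ satisfies $H(p_{\mathrm{data}},\nu*g_\sigma)\ge H(p_{\mathrm{data}},g_\sigma)$, and equality forces $\nu=\delta_0$ (the equality case of Jensen together with the strictness $h(s)<1$ for $s>0$).

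\textbf{Where $R/\sqrt{D-1}$ comes in.} The coordinate map $u\mapsto u_1$ is $1$-Lipschitz on $\mathbf{S}^{D-1}$ and has zero mean, so by concentration of measure on the sphere it is sub-Gaussian with variance proxy $1/(D-1)$, i.e. $\mathbb{E}_u[e^{t u_1}]\le e^{t^2/(2(D-1))}$. Therefore
\[
\log h(s)\ \le\ -\frac{s^2}{2\sigma^2}+\frac{R^2s^2}{2(D-1)\sigma^4}\ =\ \frac{s^2}{2\sigma^2}\Big(\frac{R^2}{(D-1)\sigma^2}-1\Big),
\]
which is $\le 0$ for all $s\ge 0$ exactly when $\sigma^2\ge R^2/(D-1)$. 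Together with the previous step this proves the theorem.

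\textbf{Expected main obstacle.} The last step is the real content: the entire radius threshold is the sub-Gaussian constant in disguise, so one must invoke the sharp curvature/log-Sobolev concentration inequality for the unit $(D-1)$-sphere — equivalently, bound the Laplace transform $\mathbb{E}_u[e^{t u_1}]$, essentially a modified Bessel function, by $e^{t^2/(2(D-1))}$. A finer analysis of that transform would sharpen the sufficient condition (the relevant moment being the variance $1/D$ of $u_1$ rather than the Lipschitz proxy $1/(D-1)$, apparently yielding $\sigma\ge R/\sqrt{D}$), but the clean Lipschitz estimate already gives the claim as stated. The only subsidiary issue is to make the ``arbitrarily powerful'' passages of the reduction rigorous — that the inference family drives the ELBO gap to $0$ and that the decoder/prior pair realises every mixing measure $\nu$ — which is a routine density/approximation argument.
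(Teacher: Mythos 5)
Your proof is correct, but it takes a genuinely different route from the paper's. The paper first argues (somewhat heuristically, via radial symmetry and a "mass where $p(x|\mu)$ is maximal" argument) that the optimal mixing distribution $p_\theta(\mu)$ may be restricted to uniform laws on spheres $S_{D-1}(r)$, and then differentiates the resulting one-parameter family in $r$, showing the derivative (Eq.~(\ref{eq:pthetaderiv})) is pointwise non-positive when $R^2\le(D-1)\sigma^2$, so that $r=0$ is the global maximum. Your Jensen bound $H(p_{\mathrm{data}},\nu*g_\sigma)\ge-\log\int\Psi\,d\nu\ge-\log\Psi(0)=H(p_{\mathrm{data}},g_\sigma)$ replaces that restriction: it is valid for \emph{arbitrary} mixing measures $\nu$ and is pinned exactly at $\nu=\delta_0$ because $g_\sigma$ is constant on the data sphere. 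This is cleaner and closes a real gap in the paper's reduction to sphere-uniform $\nu$. The computational core is the same object in both proofs (the spherical Laplace transform $\mathbb{E}_u[e^{(Rs/\sigma^2)u_1}]$, i.e.\ a normalized modified Bessel function $I_{(D-2)/2}$), but you bound it globally by an MGF estimate where the paper differentiates it in $r$. Your remark that the sharp threshold is really $R/\sqrt{D}$ is also right, and is corroborated by the paper itself: the sign of Eq.~(\ref{eq:pthetaderiv}) at $r\to0^+$ flips at $R^2=D\sigma^2$ (using $\int\sin^D=\frac{D-1}{D}\int\sin^{D-2}$), and Fig.~\ref{fig:contourplot} shows the transition at $R/\sigma\approx\sqrt2=\sqrt D$ for $D=2$; the paper's $(D-1)$ condition is only what makes the integrand pointwise non-positive.

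One point to tighten: the constant in $\mathbb{E}_u[e^{tu_1}]\le e^{t^2/(2(D-1))}$. The off-the-shelf Lipschitz concentration for $\mathbf{S}^{D-1}$ obtained from the Bakry--\'Emery/Ricci route gives variance proxy $1/(D-2)$, which would only yield the weaker threshold $R/\sqrt{D-2}$; the $1/(D-1)$ proxy requires the sharp log-Sobolev constant $D-1$ of the sphere (Mueller--Weissler) plus Herbst. The safest and most elementary fix is the series comparison you allude to: writing $\mathbb{E}_u[e^{tu_1}]=\sum_{k\ge0}\frac{(t^2/4)^k}{k!\,(D/2)(D/2+1)\cdots(D/2+k-1)}$ and bounding the Pochhammer factor below by $(D/2)^k$ gives $\mathbb{E}_u[e^{tu_1}]\le e^{t^2/(2D)}\le e^{t^2/(2(D-1))}$, which both rescues your stated bound and delivers the sharper $R/\sqrt D$ threshold with strict inequality $h(s)<1$ for $s>0$ (needed for your uniqueness claim, which the Lipschitz proxy alone does not give at the boundary $\sigma^2=R^2/(D-1)$). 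The remaining informal step --- that infinite capacity makes the ELBO tight and realizes every mixture $\nu*g_\sigma$ --- is at the same level of rigor as the paper's own reduction, so it is not a point of concern here.
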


\begin{proof}

Let $\mu(z)$ denote the output of the decoder network of the VAE. By definition, $p_\theta(x|z) = \mathcal{N}(\mu(z), \sigma^2 \mathbf{I})$.
Under the large sample and arbitrarily powerful neural architecture assumptions, the VAE can exactly characterize $q(z|x)$ and $p(z)$. Distribution $p(z)$ can be replaced by the distribution over the (deterministic) decoder output $p(\mu)$. The ELBO equation therefore becomes an equality and $p_\theta(x)$ can be computed exactly:

\begin{equation}
    p_\theta(x) = \int p(x | \mu) p_\theta(\mu) d\mu
    \label{eq:ptheta}
\end{equation}
with
\begin{equation}
p(x|\mu) = \frac{1}{(\sqrt{2\pi} \sigma)^D} \exp\left(-\frac{1}{2 \sigma^2} \|x - \mu\|^2\right)
\end{equation}
Under the assumptions made, the optimal distribution $p_\theta(\mu)$ is derived as follows.
The data being radially symmetrical, the resulting distribution will necessarily be so as well. Distribution $p_\theta(\mu)$ thus only depends on the distance $r$ from $\mu$ to the center of the sphere. Furthermore, $p_\theta(x)$ being the expectation of function $p(x|\mu)$ under $p_\theta(\mu)$, it is maximized when $p_\theta(\mu)$ has all its mass where $p(x|\mu)$ reaches its maximum. Accordingly,  $p_\theta(\mu)$ ranges among the uniform distributions over some $S_{D-1}(r)$,
hyper-sphere of some radius $r$ in dimension $D$  
with same center as the dataset. Taking normalization into account, it comes:
\begin{equation}
    p_\theta(\mu) = \frac{1}{r^{D-1} A_{D-1}} \mathbf{1}_{(\|\mu\| = r)}
\end{equation}
with $A_{D-1}$ the area of a unit $D$-dimensional sphere. Using spherical coordinates, 
\begin{equation}
\begin{array}{c}
    \|x - \mu\|^2  = R^2 + r^2 - 2 r R \cos(\phi)\\
    d\mu = A_{D-2} r^{D-1} \sin^{D-2}(\phi) dr d\phi
\end{array}
\end{equation}
with $\phi$ the angle between $x$ and $\mu$ wrt the center of the sphere and $A_{D-2}$ the area of the unit hyper-sphere in dimension $D-2$ (over which the integration is trivial as it has no impact on other terms).

Removing constant terms wrt to $r$ and $\phi$ it comes:
\begin{equation}
    p_\theta(x) \propto e^{-\frac{r^2}{2 \sigma^2}} \int_{\phi=0}^{\pi} sin^{D-2}(\phi) \exp\left(\frac{r R}{\sigma^2} \cos(\phi)\right) d\phi
    \label{eq:pthetaprop}
\end{equation}

It is easily shown that the derivative of $p_\theta(x)$ has same sign as:
\begin{equation}
    \int_{0}^{\pi} \frac{r}{\sigma^2} \left(\frac{R^2 \sin^2(\phi)}{(D-1)\sigma^2} - 1\right)\sin^{D-2}(\phi) e^{rR\cos(\phi)} d\phi
    \label{eq:pthetaderiv}
\end{equation}
Accordingly, an extremum is reached for $r=0$, and $r=0$ is the global maximum for $\frac{R^2}{(D-1)\sigma^2} \leq 1$. 
Therefore, if the radius $R$ of the data hyper-sphere is small compared to standard deviation $\sigma$ ($R \leq \sigma \sqrt{D-1}$), in the most favorable case of large sample limit and arbitrary power of the neural architectures, the VAE can but approximate the data manifold by a normal distribution with same center as the dataset hypersphere. In other words, it grossly characterizes the volume of the hyper-sphere instead of its area. 
\end{proof}

Inspecting the second order derivative of Eq (\ref{eq:pthetaprop}) shows that while $r=0$ is a local optimum, it is a local maximum for $R < \sigma \sqrt{N}$, and a local minimum for larger values of $R$. In the latter case, the optimal value of $r$ asymptotically converges toward $R=r$. In other words, the generative model $p_\theta(x)$ is expressed as a mixture of Normal distributions, the mean of which is located on the data manifold: the generative model is accurately paving the data manifold. 

For $D=2$, Eq. (\ref{eq:pthetaprop}) can be computed analytically; its heat map depending on  $\frac{R}{\sigma}$ and $\frac{r}{\sigma}$ is depicted in Fig. \ref{fig:contourplot}. For small values of $\frac{R}{\sigma}$, $r=0$ is the only global maximum. As $\frac{R}{\sigma}$ increases, the maximum moves toward the $r=R$ diagonal; graphically, this change occurs for $\frac{R}{\sigma} \approx \sqrt{2}$.

\begin{figure}[htb]
    \centering
    \includegraphics[width=.9\columnwidth]{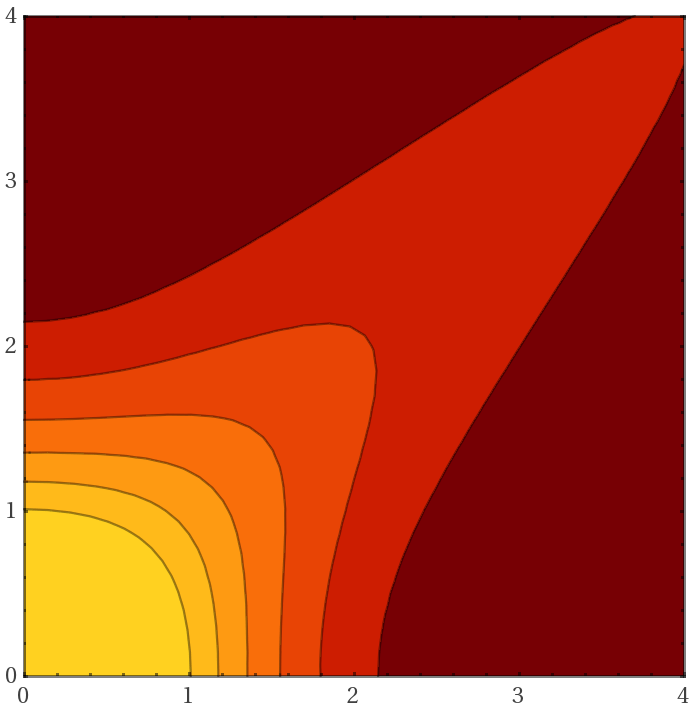}
    \caption{Heat map of $p_\theta(x)$ vs $\frac{R}{\sigma}$ (horizontal axis) and $\frac{r}{\sigma}$ (vertical axis) for D=2 (the lighter the higher; better seen in color). See comments in text.}
    \label{fig:contourplot}
\end{figure}

\subsection{External control of the model sharpness}\label{sec:ext}
Most generally, the sharpness of the distribution class governs the information extracted from the data. For a given fixed sharpness, the observation model is bound to ignore all details that are small relatively to this sharpness: the gain in terms of reconstruction loss is smaller than the cost incurred by encoding these details in the latent variables, regardless of the data amount and of the representation power of the neural encoder and decoder. 

The sharpness impact is often hidden by the fact that generated samples are derived as the expectation of distribution $p_\theta(x|z)$, particularly so in the case of computer vision \cite{larsen_autoencoding_2016,dosovitskiy_generating_2016,higgins_beta-vae:_2017}: the generated sample simply is the raw output of the decoder, that is the mean\footnote{The interpretation is more complex for other losses such as the Bernoulli loss \cite{loaiza-ganem_continuous_2019}. } of $p_\theta(x|z)$. This generation process indeed mitigates the impact of large $\sigma$ values; still, it does not address the potential mis-specifications of the observation model. In the particular case of images, the squared error loss corresponds to a Normal distribution with variance $\sigma^2 = \frac{1}{2}$. This very large variance (considering that pixels are usually normalized in the $[0,1]$ interval) causes most data details to be discarded, explaining the blurriness of the generated images.\\
Most generally, the VAE learning criterion aims at a generative process sampling (as opposed to, averaging) $p_\theta(x|z)$. The averaging strategy thus creates some discrepancy between the well-founded VAE design and its actual usage. 

Another strategy is to reconsider the VAE training loss itself and design a more effective one, be it derived from a principled probabilistic setting or not. The simplest option is to control the trade-off between the reconstruction loss 
and the latent compression by means of some penalization weight $\lambda$ on the latent term.
In the general case, increasing the weight of the latent term cannot be rigorously interpreted in terms of the sharpness of the observation model due to normalization issues,\footnote{The importance of taking into account such normalization issues, and the impact of neglecting those are discussed in  \cite{loaiza-ganem_continuous_2019}.} but informally speaking, introducing a $\lambda$ term in front of the latent term of the loss is similar to raising the observation model distribution at the power $\frac{1}{\lambda}$: depending on the value of $\lambda$ the model might capture finer-grained details of the data and increase the latent loss ($\lambda < 1$); or discard the details and achieve a higher compression of the latent variables ($\lambda > 1$). \\
In the case of a Normal distribution with fixed variance $\sigma^2$, using a penalization weight $\lambda$ on the latent loss exactly corresponds to multiplying $\sigma$ by $\sqrt{\lambda}$.

\subsection{Learning the  model sharpness}
As the VAE success depends on a good approximation of the data manifold,
and the quality of the approximation depends in part on the observation model, one possibility is to allow this model to be arbitrarily sharp, e.g. by making the variance of the Normal distribution a learned parameter, thus defining the L-VAE setting.\footnote{This setting must be distinguished from the well-known re-parameterization trick \cite{goodfellow_generative_2014}, operating on the latent distribution $q(z|x)$.} The reconstruction term of the L-VAE loss then reads (up to an additive constant), with $D$ the dimension of the input space:
\begin{equation}
    - \log p_\theta(x | z) = \frac{1}{2 \sigma^2}\|x-\mu(z)\|^2 + D \log \sigma
    \label{eq:lsigma}
\end{equation}
Maximizing the likelihood of the generated samples based on Eq. (\ref{eq:lsigma}) allows the L-VAE to autonomously adjust the trade-off between the reconstruction and the latent losses (as opposed to, manually setting a fixed  penalization weight on the latent term). As will be seen (section \ref{sec:dyn}), the desirable balance between both terms evolves along the learning trajectory, making all the more important to let the VAE control the trade-off.

Normal distributions with learned $\sigma$ permit in principle an arbitrarily accurate approximation of the manifold, by selecting arbitrarily small values of $\sigma$, while the $D \log \sigma$ term (Eq (\ref{eq:lsigma})) is an incentive to choose $\sigma$ as small as possible, to the extent permitted by the %reconstruction loss.
rest of the model.

A straightforward refinement of the above isometric Normal distribution is to consider a diagonal covariance matrix ${\cal N}(\mathbf{\mu}(z), \mathbf{\sigma}(z))$, with $\mathbf{\sigma}$ a $D$-dimensional vector computed by the decoder akin $\mathbf{\mu}$. The associated training loss reads:
\begin{equation}
\label{eq:learned-multi-sigma}
    - \log p_\theta(x | z) = \sum_{i=1}^D \frac{1}{2 \sigma_i(z)^2} (x_i - \mu_i(z))^2 + \log \sigma_i(z)
\end{equation}

This refinement might still be insufficient when the data manifold is not aligned with the axes of the input space data, requiring a Normal distribution with full covariance matrix to accurately describe the manifold. Learning the full covariance matrix however raises strong scalability issues, 
and only the scalar and vectorial L-VAE settings (Eqs. (\ref{eq:lsigma}) and (\ref{eq:learned-multi-sigma})) are considered in the following.

\section{Inspecting the VAE dynamics}\label{sec:dyn}

This section investigates in more depth the impact of the observation model on the generative process. An artificial dataset, used to provide empirical evidence and backup our first claim, enables to inspect the VAE learning  dynamics. The main lesson learned, and the second claim of the paper, is that learning the sharpness of the observation model allows the VAE to escape bad local optima of the optimization landscape.
The claims and the proposed interpretation of the results on the artificial dataset are confirmed by experiments on MNIST and CelebA \cite{liu_deep_2015}.

\subsection{Experimental setting}
The main goal of experiments is to assess the impact of the observation model and its sharpness, be it fixed or learned, on the reconstruction and generation abilities of a VAE. 

An artificial problem is designed in $\mathbf{R}^2$  (Fig. \ref{fig:syn-dataset}): data are located on a 1D manifold,  structured in three regions with moderate, small and large noise. This design aims to independently assess whether the considered VAE settings enable to identify the manifold and its fine-grained details.

\begin{figure}
    \centering
    \includegraphics[width=0.9\columnwidth]{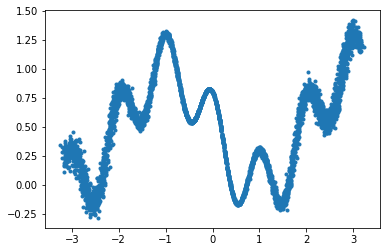}
    \caption{Artificial dataset in $\mathbf{R}^2$, located on a 1D manifold with heteroscedastic noise.}
    \label{fig:syn-dataset}
\end{figure}

The neural architecture used for both the encoder and the decoder is a ResNet \cite{he_deep_2016} with 4 residual blocks of 200 neurons. Considering the data complexity, this architecture is expected to offer a "sufficient" representation capacity (satisfying the assumptions of Thm 1), and make the identification of the data manifold the only bottleneck for VAE. Likewise, the considered dataset satisfies the large sample limit assumption, with 100 points sampled anew from the data distribution in each batch.

Several VAE settings are experimentally compared, only differing in the observation model ${\cal N}(\cdot,\sigma)$, with no penalization weight on the latent term:\\
i) F-VAE considers a fixed scalar  
$\sigma$, ranging in  $\{1.0, 0.3, 0.1, 0.03, 0.01 \}$; ii) LS-VAE considers a learned scalar $\sigma$ (initialized as $\sigma = 1$); iii) LV-VAE considers a learned vector $\sigma(z)$, as an output of the decoder network; iv) finally, F$^*$-VAE considers a fixed scalar $\sigma$ set to the final value learned by LS-VAE.

The first performance indicator is
the ELBO value\footnote{
The ELBO values reported for F-VAE are augmented with a $2\log \sigma$ term, for a fair comparison with LS-VAE and LV-VAE, after Eqs (\ref{eq:lsigma}) and (\ref{eq:learned-multi-sigma}).
}, quantitatively assessing the generative model. Two other qualitative performance indicators are proposed: The reconstruction performance is qualitatively assessed by displaying the reconstructed samples (the mean of $p_\theta(x|z)$) on the top of the initial samples; a perfect reconstruction is characterized by the fact that the reconstructed samples exactly hide the initial ones. Likewise, the generative performance is qualitatively assessed by comparing the generated samples
with the initial dataset.

\subsection{Results and analysis}
\begin{table}
    \centering
    \begin{tabular}{l|c|c}
      Setting &  $\sigma$ & ELBO \\
        \hline
     F-VAE, Fixed $\sigma$ &   $\sigma = 1.0$ & $-1.14 \pm 0.02$ \\
       &  $\sigma = 0.3$ & $-0.063 \pm 0.03$ \\
       &  $\sigma = 0.1$ & $0.773 \pm 0.09$ \\
       &  $\sigma = 0.043$ & $0.335 \pm 0.08$ \\
       &  $\sigma = 0.03$ & $0.141 \pm 0.10$ \\
       &  $\sigma = 0.01$ & $-0.961 \pm 0.12$ \\ \hline
       LS-VAE, scalar $\sigma$ & $\sigma^* = 0.043$ &  $\mathbf{1.26 \pm 0.09}$ \\ \hline
       LV-VAE, vector $\sigma$ & & 
$\mathbf{1.62 \pm 0.08}$
    \end{tabular}
    \caption{ELBO values (averaged over 10 runs) of F-VAE, LS-VAE and LV-VAE; the higher the better.
    }
    \label{tab:syn-elbos}
\end{table}

Table \ref{tab:syn-elbos} displays the ELBO indicator for all considered VAE settings (averaged on 10 runs). These results show the decisive impact of learning the variance of the observation model: the option significantly dominating all others is LV-VAE, learning a vector $\sigma$; the second best is LS-VAE, learning a scalar $\sigma$.

Most interestingly, when fixing $\sigma$ to the optimal scalar value found by LS-VAE ($\sigma = .043$) the ELBO obtained by F$^*$-VAE is significantly  degraded compared to LS-VAE; the ELBO of F$^*$-VAE also is significantly lower than for F-VAE with $\sigma = 0.1$.
These results suggest that the dynamics of the learning process matters as much as the actual sharpness $\sigma$ of the observation model. 

A more detailed analysis is permitted by the qualitative performance indicators, visually displayed in Fig. \ref{fig:syn-sigmas}. 
Regarding F-VAE, the reconstruction  is poor for high values of $\sigma$ (Fig. \ref{fig:syn-sigmas}.(a), Left; all details are lost, following the analysis in section \ref{sec:th}); as $\sigma$ decreases, the reconstruction gradually improves and the reconstructed samples exactly hide the initial ones for $\sigma \le .03$ (at the expense of the compression and generation quality). Note that for medium values of $\sigma$, the reconstructed samples are closer to the manifold than the initial ones, particularly so in the regions with high or moderate noise (Fig. \ref{fig:syn-sigmas}.(b), Left). This phenomenon is likewise explained by the fact that the VAE discards details that are small comparatively to the model sharpness (section \ref{sec:th}): when the data noise is smaller than $\sigma$ (e.g. for $\sigma=.1$), the noise is discarded, i.e. the reconstruction operates like a denoiser.\\
Regarding F-VAE, the generation  is poor in all cases but for $\sigma=.1$ (Fig. \ref{fig:syn-sigmas},(a-c), Right). For high values of $\sigma$, the details are missed, with a roughly 
cosine-like shape generated for 
$\sigma = .3$. For small values of $\sigma$, the generated samples also form a rough shape with many samples far from the manifold, all the more so as $\sigma$ decreases. Despite the blurriness of the generated samples, their average does accurately capture the data manifold. This situation is similar to that observed on, e.g., the CelebA dataset, where the generated samples are seen as random noise, while their average rightly resembles a face. 

Regarding L-VAE, in both scalar and vector cases the reconstruction is good although the denoising effect is observed (reconstructed samples are closer to the manifold), which is attributed to the comparatively large value of the learned $\sigma$ (Fig. \ref{fig:syn-sigmas},(d),(f), Left). In both scalar and vector cases, the generation is quite good compared to that of F-VAE at its best. In the scalar case, the generated samples are located on a single-width band centered on the manifold: the observation model does not have the representation power needed to fit the varying data noise. In the vector case, the observation model allows to encode both the manifold and its thickness, and the generated samples exactly match the initial samples (perfect generation). 

In order to disentangle the impacts of the model sharpness and of the optimization process, additional experiments are conducted with a fixed-$\sigma$ VAE, where $\sigma$ is set to the value learned by FS-VAE ($\sigma = .043$). The reconstruction and generation results are displayed on Fig. \ref{fig:syn-learned-sigmas}.(e). The reconstruction is decent, with no denoising effect as $\sigma$ is sufficiently small to capture the details. The generation is much poorer than for L-VAE, and even poorer than for F-VAE with  $\sigma = .1$, with the generated samples located on a thick band centered on the manifold. 

This comparison suggests that besides the sharpness of the observation model, the second key issue for the VAE performance is the trade-off between the reconstruction and the latent losses, and how this trade-off is dynamically controlled along the learning trajectory. It is observed that FS-VAE decreases the learned $\sigma$ value by jumping from time to time to lower $\sigma$ plateaus, and these jumps coincide with leaps of the ELBO (more in appendix).

The dynamics of the learned $\sigma$ along the VAE training can be interpreted in terms of an annealing process, taking inspiration from the analysis of \cite{huang_improving_2018}. In the early learning stage, the large $\sigma$ value compensates for the large reconstruction loss and lowers the energy landscape. At this stage, the latent compression term of the loss dominates, and the model only aims to match the global shape of the data manifold. As the reconstruction at the large scale improves, the model can then afford to lower $\sigma$ and improve the overall loss. This has the effect of gradually making the energy landscape steeper, raising more details of the data to the attention of the model, and allowing $\sigma$ to be again lowered as the reconstruction loss decreases. The process continues until any improvement of the reconstruction term would be
cancelled out by a degradation of the latent term. At this point $\sigma$ stabilizes and the self-controlled annealing ends. This gradual evolution of the energy landscape allows the VAE to iteratively build and refine its latent representation, eventually yielding a better compression than the one that would be reached by running the optimization in a fixed energy landscape.

Complementary experiments (in appendix) show that an externally set $\sigma$ schedule yields the same results (quantitative ELBO, visual reconstruction and generation) as the LS-VAE, confirming the importance of a dynamic schedule to adjust the reconstruction {\em vs} latent optimization trade-off.

\begin{figure}
    \centering
    \subfloat[$\sigma = 0.3$]{
        \includegraphics[width=0.45\columnwidth]{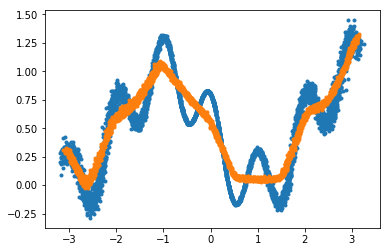}
        \includegraphics[width=0.45\columnwidth]{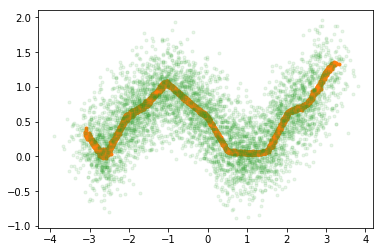}
    }
    
    \subfloat[$\sigma = 0.1$]{
        \includegraphics[width=0.45\columnwidth]{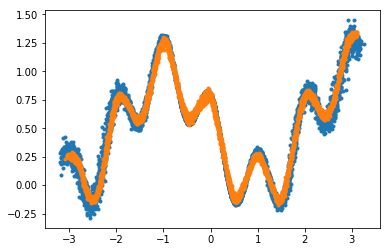}
        \includegraphics[width=0.45\columnwidth]{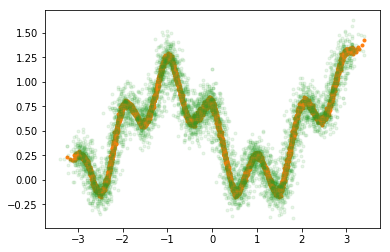}
    }
    
    \subfloat[$\sigma = 0.01$]{
        \includegraphics[width=0.45\columnwidth]{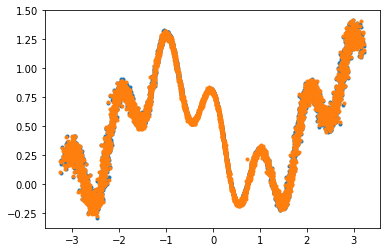}
        \includegraphics[width=0.45\columnwidth]{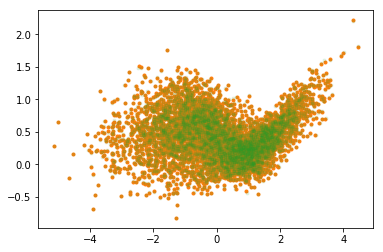}
    }

   \subfloat[Learned scalar $\sigma$]{
        \includegraphics[width=0.45\columnwidth]{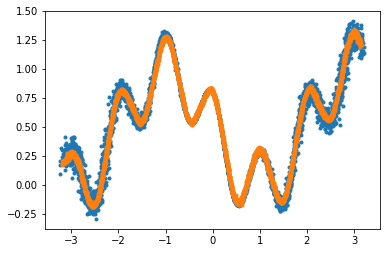}
        \includegraphics[width=0.45\columnwidth]{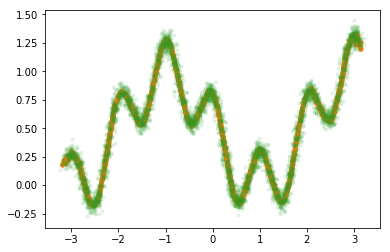}
    }

    \subfloat[$\sigma = 0.043$]{
        \includegraphics[width=0.45\columnwidth]{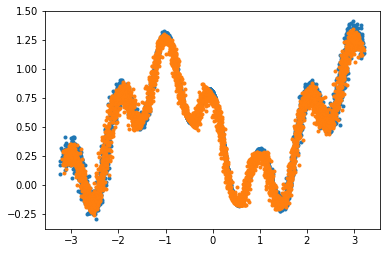}
        \includegraphics[width=0.45\columnwidth]{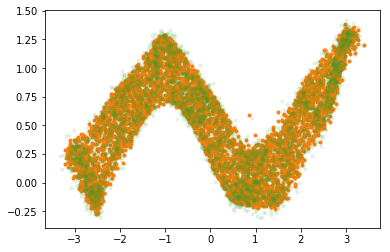}
    }

    \subfloat[Learned vectorial $\sigma$]{
        \includegraphics[width=0.45\columnwidth]{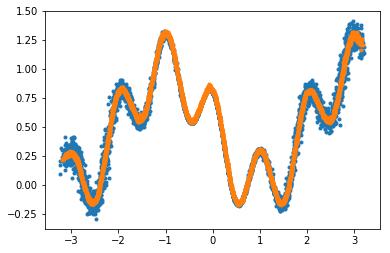}
        \includegraphics[width=0.45\columnwidth]{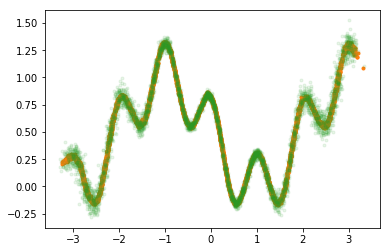}
    }

    \caption{(a)-(c): F-VAEs performance with fixed $\sigma$ ranging from 1 to .01 from top to bottom ; (d): LS-VAE; (e): F-VAE with same $\sigma$ value as LS-VAE; (f): LV-VAE (better seen in color). 
    \newline
    Left: Reconstructed samples (in orange) and initial samples (in blue). Right: Generated samples (in green) and means of $p_\theta(x|z)$ (in orange).}
    \label{fig:syn-sigmas}
    \label{fig:syn-learned-sigmas}
\end{figure}

\subsection{Image datasets}

The significance of the above discussion is examined by comparing the F-VAE and LV-VAE settings on the MNIST and CelebA datasets. 

Note that a usual practice in computer vision is to set the reconstruction loss to the mean square error, which is equivalent to considering a fixed Normal distribution with variance $\sigma^2 = 1/2$. 
As said, this large variance is expected to prevent the reconstruction from capturing the fine-grained details and to cause generated samples to look like random ones (all things considered, the situation is similar to that of $\sigma = .3$ in the 1D artificial problem). The generative process thus can only fall back on the average of the generated samples, that is realistic although blurry (Fig. \ref{fig:vae-rec-gen}.(a-b), Left). 

\begin{table}
    \centering
    \begin{tabular}{c|c|c}
        Dataset & F-VAE & LV-VAE \\
        \hline
        MNIST & $0.06$ & $\mathbf{4.89}$ \\
        CelebA & $5.52$ & $\mathbf{57.51}$ \\
    \end{tabular}
    \caption{Comparison of F-VAE (mean square error) and LV-VAE on MNIST and CelebA: ELBO (in bits per pixel, accounting for the $\log\sigma$ term for a fair comparison). }
    \label{tab:img-elbo}
\end{table}

\begin{figure}[!tbh]
    \centering
    \subfloat[MNIST reconstruction]{
        \adjincludegraphics[width=0.45\columnwidth, trim={0 0 {.5\width} 0},clip]{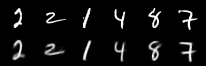}
        \adjincludegraphics[width=0.45\columnwidth, trim={0 0 {.5\width} 0},clip]{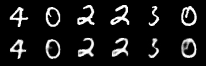}
    }
    
    \subfloat[CelebA reconstruction]{
        \adjincludegraphics[width=0.45\columnwidth, trim={0 0 {.5\width} 0},clip]{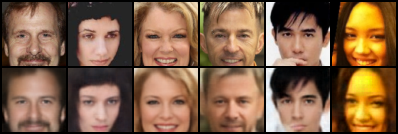}
        \adjincludegraphics[width=0.45\columnwidth, trim={0 0 {.5\width} 0},clip]{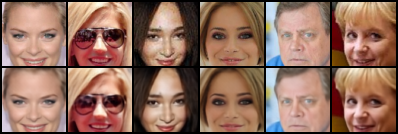}
    }
    
    \subfloat[MNIST generation]{
        \adjincludegraphics[width=0.45\columnwidth, trim={0 0 {.5\width} 0},clip]{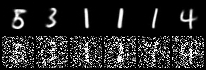}
        \adjincludegraphics[width=0.45\columnwidth, trim={0 0 {.5\width} 0},clip]{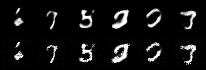}
    }
    
    \subfloat[CelebA generation]{
        \adjincludegraphics[width=0.45\columnwidth, trim={0 0 {.5\width} 0},clip]{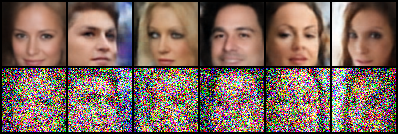}
        \adjincludegraphics[width=0.45\columnwidth, trim={0 0 {.5\width} 0},clip]{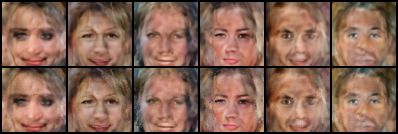}
    }
    
    \caption{Reconstruction and generation on MNIST and CelebA.
    \newline
    (a)-(b): Reconstruction; Top: initial images. Bottom Left: F-VAE (MSE loss); Bottom Right: LV-VAE (Eq. 11).
    \newline
    (c)-(d): Generation; Left: F-VAE (MSE loss), Right: LV-VAE (learned vectorial $\sigma$). Means of $p_\theta(x|z)$ are shown in the top rows; samples are shown in the bottom rows.}
    \label{fig:vae-rec-gen}
\end{figure}

The F-VAE and LV-VAE approaches are compared in terms of reconstruction on Fig. \ref{fig:vae-rec-gen}.(a-b). As LV-VAE controls the observation sharpness, it achieves a quasi pixel-perfect  reconstruction (Fig. \ref{fig:vae-rec-gen}.(a-b), Right), much better than F-VAE. The quantitative ELBO indicator (Table \ref{tab:img-elbo}) confirms that F-VAE is dominated by LV-VAE by about one order of magnitude.\footnote{As in Table \ref{tab:syn-elbos}, the ELBO values reported for F-VAE are augmented with a $\log \sigma$ term for the sake of a fair comparison.}

Regarding the generation performance, the samples $p_\theta(x | z)$ generated by F-VAE are viewed as random noise (Fig. \ref{fig:vae-rec-gen}.(c-d), Left, bottom), but their averages (the mean of $p_\theta(x|z)$) are quite similar to the inverse image of $z$, suggesting that while the VAE identifies a much simplified manifold, its latent space manages to efficiently compress this manifold (Fig. \ref{fig:vae-rec-gen}.(c-d), Left, top). Quite the contrary, the samples generated by LV-VAE (Fig. \ref{fig:vae-rec-gen}.(c-d), Right, bottom) are much better than
random images; the precision is similar to that of reconstructed images. Henceforth, the samples and their averages are very similar; unfortunately they are less convincing than the averaged samples generated by F-VAE (some of the MNIST digits are distorted, and the CelebA faces lack structure). 
A tentative interpretation for this weakness is as follows. As  LV-VAE approximates the data manifold with a better accuracy, more details need be represented, making the latent space more difficult to compress. Eventually the LV-VAE faces another bottleneck: the insufficient power of representation of the latent space.\footnote{This representation power can be limited by both the capacity of the encoder and decoder neural networks, as well as the probabilistic structure chosen for the latent variable $z$, here set to the traditional factorized Gaussian.}.

\section{Conclusion and Perspectives}
The main contributions of the paper is to shed more light on the VAE bottlenecks, to propose some principles in order to address these bottlenecks, and to illustrate the practical efficiency thereof.

Our first claim is that the primary VAE bottleneck lies in the identification of the data manifold. In particular, an observation model unable to pave the data manifold is bound to grossly approximate the data and discard important information, irrespective of the expressive power of the other VAE elements. 

Our second claim is that a good identification of the manifold can be achieved in a theoretically sound and algorithmically efficient way: the proposed L-VAE learns the sharpness of the observation model, thereby dynamically adjusting the optimization trade-off between the reconstruction and latent terms akin a simulated annealing schedule, ending up with a richer and better compressed latent space than allowed with a fixed sharpness. The L-VAE achieves a quasi pixel-perfect reconstruction, and removes the generation noise associated with a too large fixed variance. In counterpart, as the L-VAE considers more information, it exerts more pressure on the latent space and its compression capabilities, decreasing the visual realism of the raw output of the decoder (mean of $p_\theta(x|z)$).

A first perspective for further research is to revisit the diverse latent space structures and/or neural encoder and decoder architectures proposed in the literature, and to assess them {\em when coupled with a good identification of the data manifold}. The rationale is that a poor approximation of the data manifold might cause compound failures of the overall VAE, obfuscating the true impact of new latent spaces and neural architectures.

A second research perspective is to design observation models better tailored to the specifics of the dataset, able to identify the local tangent to the data manifold though with a better scalability than based on a full-rank covariance matrix.

\FloatBarrier

\bibliography{bibliography}

\clearpage

\appendix

\section{Detailed analysis on the 2D problem}

\subsection{Impact of learning sharpness $\sigma$}
The reconstruction (Fig. \ref{fig:aux_syn-learned-sigmas}, Left) and generation (Fig. \ref{fig:aux_syn-learned-sigmas}, Right) achieved by VAE when learning sharpness $\sigma$ (LS-VAE) are compared with those of F-VAE (Fig. \ref{fig:aux_syn-sigmas}), where the fixed $\sigma$ varies from 1. to .01. 

\begin{figure}[htpb]
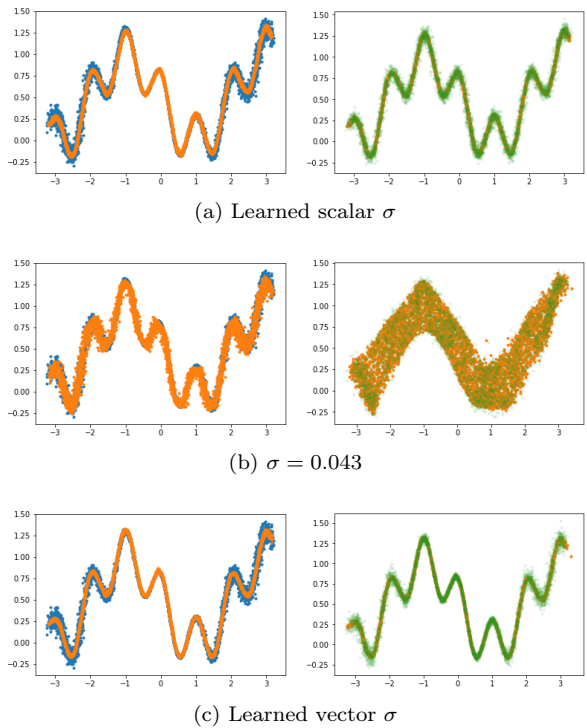

    \centering
    \subfloat[Learned scalar $\sigma$]{
        \includegraphics[width=0.49\columnwidth]{figures/sigma-global.png}
        \includegraphics[width=0.49\columnwidth]{figures/sigma-global-g.png}
    }

    \subfloat[$\sigma = 0.043$]{
        \includegraphics[width=0.49\columnwidth]{figures/sigma-0043.png}
        \includegraphics[width=0.49\columnwidth]{figures/sigma-0043-g.png}
    }

    \subfloat[Learned vector $\sigma$]{
        \includegraphics[width=0.49\columnwidth]{figures/sigma-learned.png}
        \includegraphics[width=0.49\columnwidth]{figures/sigma-learned-g.png}
    }

    \caption{VAE with learned and fixed $\sigma$ (better seen in color). Top: LS-VAE, learned scalar $\sigma$; Medium: F-VAE, fixed $\sigma$ with same value as the eventual $\sigma$ learned by LS-VAE.  Bottom: LV-VAE, learned vector $\sigma$. Left: Reconstructed samples (in orange) and initial samples (in blue). Right: Generated samples (in green) and their average (in orange).}
    \label{fig:aux_syn-learned-sigmas}
\end{figure}

Fig. \ref{fig:aux_syn-sigmas} illustrates the transition of F-VAE as $\sigma$ decreases: A rough approximation of the manifold is obtained for large $\sigma$ values ($\sigma \ge .3$, Fig. \ref{fig:aux_syn-sigmas}.(a-b), Left); A better approximation is obtained for $\sigma = .1$, though F-VAE misses the details and achieves a denoising reconstruction (Fig. \ref{fig:aux_syn-sigmas}.(c), Left); as $\sigma$ decreases, F-VAE learns the data noise as being part of the manifold (Fig. \ref{fig:aux_syn-sigmas}.(d-e), Left). In the meanwhile, the generation is acceptable only for $\sigma = .1$ (Fig. \ref{fig:aux_syn-sigmas}.(c), Right); for higher $\sigma$ values, the generation fails due to a bad approximation of the manifold (for $\sigma > .1$); for smaller $\sigma$ values, the generation failure is due to the fact that the latent space encodes a rich information which is insufficiently compressed.

\begin{figure}[htpb]
    \centering
    \subfloat[$\sigma = 1.0$]{
        \includegraphics[width=0.48\columnwidth]{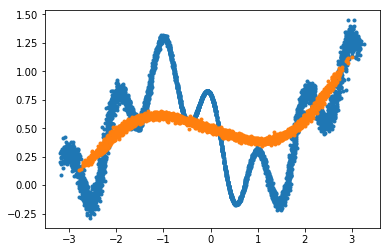}
        \includegraphics[width=0.48\columnwidth]{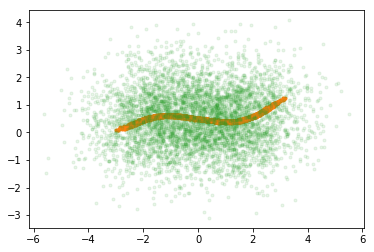}
    }
    
    \subfloat[$\sigma = 0.3$]{
        \includegraphics[width=0.48\columnwidth]{figures/sigma-0300.png}
        \includegraphics[width=0.48\columnwidth]{figures/sigma-0300-g.png}
    }
    
    \subfloat[$\sigma = 0.1$]{
        \includegraphics[width=0.48\columnwidth]{figures/sigma-0100.png}
        \includegraphics[width=0.48\columnwidth]{figures/sigma-0100-g.png}
    }
    
    \subfloat[$\sigma = 0.03$]{
        \includegraphics[width=0.48\columnwidth]{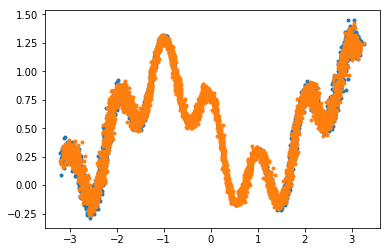}
        \includegraphics[width=0.48\columnwidth]{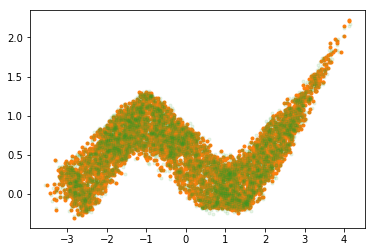}
    }
    
    \subfloat[$\sigma = 0.01$]{
        \includegraphics[width=0.48\columnwidth]{figures/sigma-0010.png}
        \includegraphics[width=0.48\columnwidth]{figures/sigma-0010-g.png}
    }

    \caption{F-VAE: VAE with fixed $\sigma$ ranging from 1 to .01 from top to bottom (better seen in color). Left: Reconstructed samples (in orange) and initial samples (in blue). Right: Generated samples (in green) and their average (in orange).}
    \label{fig:aux_syn-sigmas}
\end{figure}

\FloatBarrier

\subsection{Training dynamics}

Likewise, Figs. \ref{fig:aux_fixed-sigma-loss} and \ref{fig:aux_learned-sigma-loss} display the evolution of the training losses for respectively the fixed $\sigma$ case (F-VAE, Fig. \ref{fig:aux_fixed-sigma-loss}) and the learned one (LS-VAE, Fig. \ref{fig:aux_learned-sigma-loss}). 

In the early learning stage, the reconstruction is bad. For a fixed, small value of $\sigma$ (Fig. \ref{fig:aux_fixed-sigma-loss}), the reconstruction loss is amplified by the $\frac{1}{2\sigma^2}$ factor:

\begin{equation}
    - \log p_\theta(x | z) = \frac{1}{2 \sigma^2}\| x - \mu(z) \|^2 + \log \sigma
\end{equation}

The reconstruction then quickly improves as the model learns to auto-encode the data, and all losses stabilize as the auto-encoding is fine-tuned during the rest of the training period.

\begin{figure}[htb]
    \centering
    \includegraphics[width=\columnwidth]{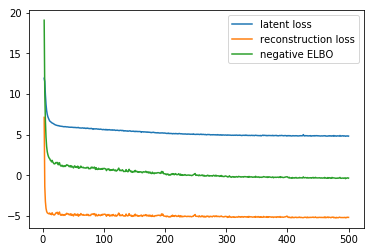}
    \caption{F-VAE: Reconstruction and latent losses along the training process for a fixed $\sigma = 0.043$.}
    \label{fig:aux_fixed-sigma-loss}
\end{figure}

\begin{figure}[htb]
    \centering
        \includegraphics[width=\columnwidth]{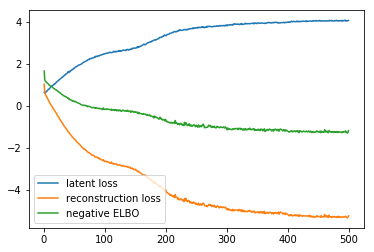}
    \caption{LS-VAE: Reconstruction and latent losses along the training process for a learned scalar $\sigma$. }
    \label{fig:aux_learned-sigma-loss}
\end{figure}

\newpage

On the contrary, when $\sigma$ is learned (Fig. \ref{fig:aux_learned-sigma-loss}), the reconstruction loss is moderate in the early learning stage, as $\sigma$ is initialized to 1. All losses are moderate at this early stage. The reconstruction loss gently decreases along the training period. Its decreases coincide with the decreases of the learned $\sigma$,  illustrated on Fig. \ref{fig:aux_learned-sigma-curve}. Simultaneously, the latent loss also gently increases. Eventually, the reconstruction loss is the same for F-VAE and LS-VAE (circa $-5$); but the latent loss is significantly lower for LS-VAE than for F-VAE ($\approx 4$ compared to $\approx 5$), yielding a better final ELBO, and therefore a higher generation quality. Our tentative interpretation is that the learning of $\sigma$ enables a gradual increase of the information stored in the latent space, allowing the VAE to converge toward a more compressed latent representation.

\begin{figure}[htb]
    \centering
        \includegraphics[width=\columnwidth]{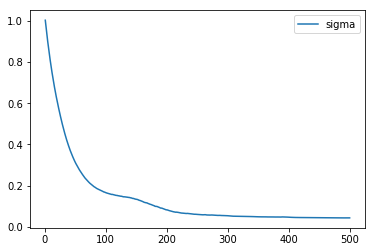}
    \caption{LS-VAE: Evolution of the learned value for $\sigma$. }
    \label{fig:aux_learned-sigma-curve}
\end{figure}

This tentative interpretation is confirmed by complementary experiments, as follows. F-VAE is launched with a fixed $\sigma$, the value of which is set according to the dynamic schedule given on Fig. \ref{fig:aux_learned-sigma-curve}.

For all runs (with different random initialization of the encoder and decoder modules), F-VAE with this dynamic schedule of $\sigma$ consistently yields the same loss curves, final ELBO and observed generation quality as LS-VAE. 
These experiments tend to confirm that the $\sigma$ learning scheme allows LS-VAE to set and follow an efficient annealing optimization scheme. 

\FloatBarrier
\newpage

\section{Complementary experiments on MNIST and CelebA}

This section presents in more detail the results obtained by 
F-VAE and LV-VAE on  MNIST and CelebA. 

Same neural network architectures are used for F-VAE and LV-VAE: the only difference is that LV-VAE considers twice as much output channels as F-VAE (associating to each image pixel a pair $(\mu, \log \sigma)$ instead of $\mu$). 

It is emphasized that the training cost is identical for both F-VAE and LV-VAE.

\subsection{MNIST}

\paragraph{Neural architectures.} The encoder and decoder architectures are symmetrical, with 3 convolution layers followed by 4 residual blocks for the encoder, and 4 residual blocks followed by 3 transposed convolutions for the decoder. The latent space dimension is 256. The training time is 5 hours (1,000 epochs). 

Figs. \ref{fig:aux_mnist-more-se-rec} and \ref{fig:aux_mnist-more-learned-rec} respectively illustrate the reconstruction with F-VAE (square error loss, equivalent to  $\sigma = \frac{1}{\sqrt{2}}$) and LV-VAE (where $\sigma$ is learned as an output of the decoder). The difference is moderate, as MNIST involves high-contrast images with few details. Still,  F-VAE reconstructs the images as a smoothed version of themselves, with smoother edges and imperfections removed. In contrast, LV-VAE more closely matches all the details of the original data.
\def\FVAE{F-VAE (fixed $\sigma = \frac{1}{\sqrt{2}}$)}
\def\LVAE{LV-VAE ($\sigma$ learned as decoder output)}

\begin{figure}[htbp]
    \centering
    \includegraphics[width=\columnwidth]{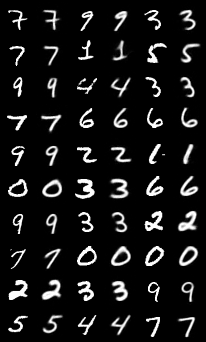}
    \caption{\FVAE: Reconstruction on MNIST.}
    \label{fig:aux_mnist-more-se-rec}
\end{figure}

\begin{figure}[htbp]
    \centering
    \includegraphics[width=\columnwidth]{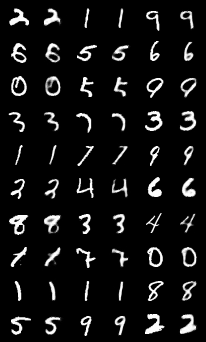}
    \caption{\LVAE: Reconstruction on MNIST.} 
    \label{fig:aux_mnist-more-learned-rec}
\end{figure}

Figs. \ref{fig:aux_mnist-more-se-gen} and \ref{fig:aux_mnist-more-learned-gen} likewise illustrate the generation with F-VAE and LV-VAE, comparing for each model the mean of the output $p_\theta(x|z)$ with actual samples from it.
A striking finding is the difference between the samples and their expectation for F-VAE (Fig. \ref{fig:aux_mnist-more-se-gen}): the former ones are close to random noise while the latter ones are smoothed realistic images. 

As said (section 3.2), this random noise effect explains why the means are preferred to samples, inducing some discrepancy between the VAE design (F-VAE being optimized to sample $p_\theta(x|z)$) and its usage. 

On the contrary, LV-VAE yields precise distributions
(Fig. \ref{fig:aux_mnist-more-learned-gen}), as illustrated by the similarity between the mean of $p_\theta(x|z)$ and its samples. The slight differences (noise on the edge of the shapes) are explained as LV-VAE retains larger $\sigma$ values at the frontier between the black and white regions in the image.

As a counterpart for its ability to reconstruct details, LV-VAE stores more information in the latent space, making it more difficult to compress, and leading to less realistic generated images. The bottleneck here comes from the  insufficient expressivity of the encoder and decoder architectures; a more powerful model is needed to appropriately compress the data.

\begin{figure}[htbp]
    \centering
    \includegraphics[width=\columnwidth]{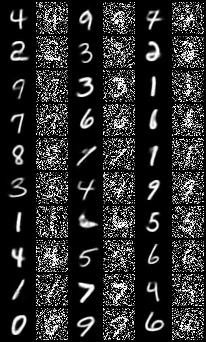}
    \caption{\FVAE: Generations (mean and sample of $p_\theta(x|z)$) on MNIST.}
    \label{fig:aux_mnist-more-se-gen}
\end{figure}

\begin{figure}[htbp]
    \centering
    \includegraphics[width=\columnwidth]{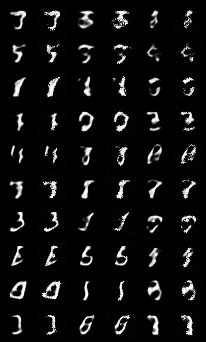}
    \caption{\LVAE: Generations (mean and sample of $p_\theta(x|z)$) on MNIST.}
    \label{fig:aux_mnist-more-learned-gen}
\end{figure}

\FloatBarrier

\subsection{CelebA}

\paragraph{Neural architectures.} The encoder and decoder architectures are similar to those used for MNIST, but with 4 convolutions layers, 5 residual blocks, and a latent space dimension of 2048. The training time is 
ca 24 hours (200 epochs).

The reconstruction and generation performances of F-VAE and LV-VAE on CelebA are  interpreted in much the same way as on MNIST, with the fact that the human eye is more able to identify unrealistic data when it comes to human faces. Figs. \ref{fig:aux_celeba-more-se-rec} and \ref{fig:aux_celeba-more-learned-rec} show the reconstructions from F-VAE and LV-VAE. Like for MNIST, the F-VAE reconstructions (Fig. \ref{fig:aux_celeba-more-se-rec}) are smoothed out, and lacking detail (see e.g. the hair or the tip of the nose). Again, the LV-VAE reconstructions (Fig. \ref{fig:aux_celeba-more-learned-rec}) are more detailed and almost pixel perfect.

\begin{figure}[htb]
    \centering
    \includegraphics[width=\columnwidth]{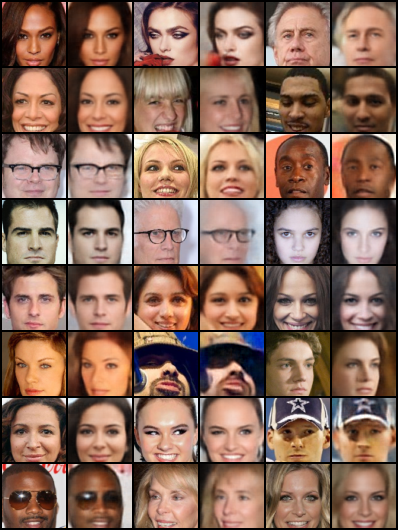}
    \caption{\FVAE: Reconstructions on CelebA.}
    \label{fig:aux_celeba-more-se-rec}
\end{figure}

\begin{figure}[htb]
    \centering
    \includegraphics[width=\columnwidth]{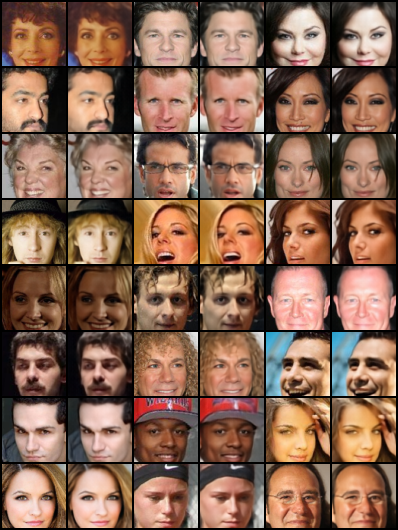}
    \caption{\LVAE: Reconstructions on CelebA.}
    \label{fig:aux_celeba-more-learned-rec}
\end{figure}

\FloatBarrier

Figs. \ref{fig:aux_celeba-more-se-gen} and \ref{fig:aux_celeba-more-learned-gen} likewise illustrate the generation with F-VAE and LV-VAE, showing actual samples of $p_\theta(x|z)$ and their expectation for each model.  The remarks made for MNIST still hold: for F-VAE, the samples  show a very high level of noise, while the means of $p_\theta(x|z)$ have a similar quality as the reconstructions. For LV-VAE, the samples are very close to the means, indicating that the learned variance is low. While the generated images are detailed, these details however lack a global consistency, harming the overall realism of the generated images. 
This limitation is likewise explained from the insufficient expressivity of the encoder and decoder neural networks, as well as the latent space structure.

\begin{figure}[htb]
    \centering
    \includegraphics[width=\columnwidth]{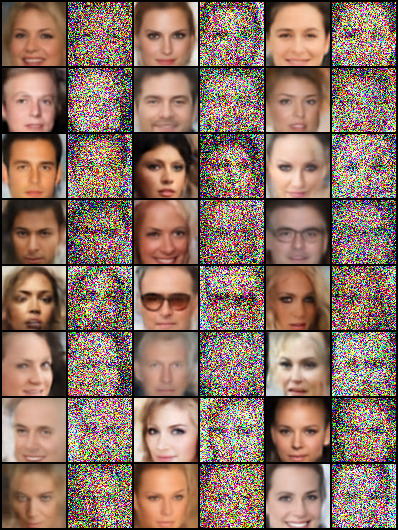}
    \caption{\FVAE: Generations (mean and sample of $p_\theta(x|z)$) on CelebA.}
    \label{fig:aux_celeba-more-se-gen}
\end{figure}

\begin{figure}[htb]
    \centering
    \includegraphics[width=\columnwidth]{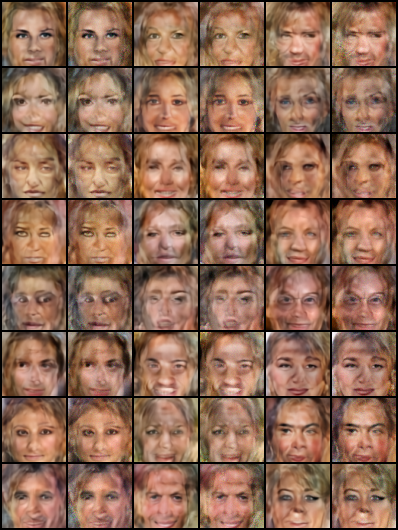}
    \caption{\LVAE: Generations (mean and sample of $p_\theta(x|z)$) on CelebA.}
    \label{fig:aux_celeba-more-learned-gen}
\end{figure}

\end{document}